%% file: main.tex
\documentclass[10pt]{article}

\usepackage[preprint]{tmlr}

\usepackage[utf8]{inputenc}
\usepackage[T1]{fontenc}
\usepackage{microtype}
\usepackage{amsmath,amssymb,amsfonts,amsthm}
\usepackage{mathtools}
\usepackage{booktabs}
\usepackage{placeins}
\usepackage{hyperref}
\usepackage{cleveref}

\setcitestyle{authoryear,open={(},close={)}}

\newcommand{\R}{\mathbb{R}}
\newcommand{\T}{\mathbb{T}}

\newtheorem{theorem}{Theorem}
\newtheorem{proposition}[theorem]{Proposition}
\newtheorem{lemma}[theorem]{Lemma}

\theoremstyle{definition}

\newtheorem{remark}{Remark}

\def\month{06}
\def\year{2026}
\def\openreview{\url{https://openreview.net/forum?id=XXXX}}

\title{Kuramoto Attention: Synchronizing Self-Attention on the Torus}

\author{\name Joshua Nunley \email joshnunl@iu.edu \\
  \addr Department of Informatics, Luddy School of Informatics,
  Computing, and Engineering \\
  Cognitive Science Program \\
  Indiana University Bloomington}

\begin{document}
\maketitle

\begin{abstract}
Transformer models are increasingly used as computational models of cognition
and neural representation, so the mechanism implemented by self-attention is of
interest beyond engineering performance. A complementary tradition in cognitive
science models coordination, binding, and memory through dynamical interactions
such as oscillator synchrony; we bring this mechanism into self-attention by
introducing the \emph{Kuramoto Attention} layer, whose value update is a
synchronization step.
Each token carries a bank of phase oscillators, so its hidden state lives on a
high-dimensional torus. The attention weights form an adaptive coupling graph,
and using the raw phase states as values makes the value update exactly the
Kuramoto coupling direction for fixed attention weights. The softmax selects
which oscillators couple, while the value path moves each token toward the
attention-weighted circular mean of the tokens it selects. We train Kuramoto
Attention on enwiki8 and CodeParrot against parameter-matched RoPE+SwiGLU
transformers. At 5M parameters on CodeParrot, it improves on the transformer by
both median and mean, with mean gaps of $0.012$ validation and $0.010$ test bits
per byte. At 5M on enwiki8, all six runs have lower validation/test medians than
the transformer and all-seed means within $0.01$ BPC; five of six also form a
tight lower-mean cluster. At 1M, it trails by about $0.02$ BPC on enwiki8 and by
$0.013$--$0.015$ bits per byte on CodeParrot. Ablations and phase diagnostics
show how the layer's synchronization and geometry-motivated components shape
model performance. The result is a self-attention mechanism whose learned
computation can be read directly as adaptive synchronization on phase states.
\end{abstract}

\input{sections/01_introduction}
\input{sections/02_method}
\input{sections/03_kuramoto_view}
\input{sections/04_experiments}
\input{sections/05_related}
\input{sections/06_conclusion}

\bibliography{references}

\appendix
\input{sections/07_appendix_metric}
\input{sections/09_appendix_experimental}

\end{document}

%% file: sections/01_introduction.tex
\section{Introduction}
\label{sec:intro}

Transformer language models are now used not only as engineering systems, but
also as computational models for cognitive neuroscience. Contextual
representations in deep language models predict aspects of neural activity
during language comprehension \citep{caucheteux2022brains,goldstein2022shared},
and recent work compares functional specialization in transformer-based language
models with specialization in the human language network
\citep{kumar2024shared}. This makes self-attention a natural object for
cognitive modeling: it is the central operation in a family of models that both
perform language prediction and provide useful comparisons to the brain. Yet the
standard description of self-attention is mostly algorithmic: queries and keys
produce a softmax weighting, and values are averaged
\citep{vaswani2017attention}.

A complementary line of work in neuroscience and cognitive modeling emphasizes
dynamical coordination. Oscillations, synchrony, and traveling waves are common
motifs in cortical activity, where they are often discussed as mechanisms for
coordination across space, time, and neural populations
\citep{muller2018cortical}. In machine learning, AKOrN uses this motivation to
replace ordinary threshold-like units by Kuramoto oscillatory neurons:
synchronization binds distributed features, implements a form of competitive
compression, and can be combined with attention, convolution, or fully connected
connectivity \citep{miyato2025akorn}. We put this dynamical coordination
directly inside the attention layer: attention weights supply the coupling graph,
and the value update is a synchronization step.

In Kuramoto Attention, each token carries a small bank of phase oscillators, one
phase per hidden coordinate, so the token state lives on a high-dimensional
torus. Attention supplies an adaptive coupling graph between tokens. The query
and key maps score which phase states should couple; the values are the phase
states themselves. For fixed attention weights, the value direction is exactly
the Kuramoto coupling term $\sum_u A_{t,u}\sin(\theta_u-\theta_t)$
(\Cref{lem:weld}). The softmax matrix is the coupling kernel, each token is
pulled toward the attention-weighted circular mean of the tokens it attends to,
and the single-step phase update is the negative gradient of an
attention-weighted coherence energy.

The softmax weights are the entropy-regularized assignment to tokens
with similar phases (\Cref{prop:coordasc}): they favor neighbors with high
phase-alignment scores while staying as spread out as the entropy penalty
allows. The gated score and the value update therefore express the same
coherence (\Cref{lem:weld}). Rotary position enters as a position-dependent
phase drift in the score (\Cref{sec:kuramoto}); in the Kuramoto interpretation,
these learned drift rates are the layer's natural-frequency terms, advancing
each oscillator before two token states are compared.

The construction also changes the usual query/key/value roles. The query and
key maps produce metric gates: their
product weights the native cosine score and defines the diagonal metric used for
selection. The values are the raw phase states; the learned value-side map is a
signed per-coordinate gate applied after the circular mean has been projected
onto the tangent. This value path keeps coordinate values separate through the
synchronization step, and the feed-forward block performs the additive
cross-coordinate mixing (\Cref{sec:valuepath}). The result is an attention
layer in which learned gates determine which tokens couple, and a gated tangent
update moves each token toward the circular mean of those selected tokens.

We compare Kuramoto Attention with parameter-matched RoPE+SwiGLU transformers
on enwiki8 and CodeParrot at 1M--5M parameters. At 5M on CodeParrot, Kuramoto
Attention improves on the transformer by both mean and median, with mean gaps of
$0.012$ validation and $0.010$ test bits per byte. At 5M on enwiki8, five of six
Kuramoto runs form a tight lower-mean cluster, and all six runs have lower
validation/test medians than the transformer with all-seed means within $0.01$
BPC. At 1M, the transformer leads by about $0.02$ BPC on enwiki8 and by
$0.013$--$0.015$ bits per byte on CodeParrot. Ablations identify the components
that account for most of the performance changes
(\Cref{sec:ablations}).

For cognitive science, the construction bridges transformer language models and
oscillator models of coordination: it builds a transformer-style language model
from a state space and update rule that make synchronization, coherence, phase
drift, and dynamical coordination explicit. Geshkovski et al.\
\citep{geshkovski2023emergence,geshkovski2024mathematical} analyze standard
attention as an interacting particle system whose trajectories cluster
asymptotically, and oscillator-attention models such as AKOrN
\citep{miyato2025akorn} add Kuramoto dynamics to attention by design. Here
synchronization comes from the layer's own value update, which uses the phase
states themselves as values.

Our contributions are:
\begin{itemize}
\item a phase-valued self-attention layer whose raw-state value update is a
synchronization step: the softmax matrix is a content-dependent coupling kernel
and the update is exactly Kuramoto coupling for fixed attention weights
(\Cref{sec:kuramoto});
\item a bridge between transformer-style language modeling and oscillator-based
models of neural coordination: the construction keeps the standard attention
selection mechanism but makes the value update a phase-synchronization step;
\item matched-transformer language-modeling comparisons on enwiki8 and
CodeParrot, together with ablations and phase diagnostics that identify the
metric-gated selection path, circular-mean value update, and local
synchronization behavior (\Cref{sec:experiments}).
\end{itemize}

%% file: sections/02_method.tex
\section{The synchronization layer}
\label{sec:method}


Kuramoto Attention gives each token a vector of phases
$\theta\in\R^k/(2\pi\mathbb Z)^k$, a bank of $k$ unit oscillators
$e^{i\theta_j}$. A layer maps a sequence of phase states to phase increments
through a gated similarity score, a circular-mean value update, and a bounded
feed-forward block. The score produces the adaptive coupling weights, while the
value update moves each token toward the attention-weighted circular mean of the
tokens it attends to. Because the state is a vector of phases, updates are added
to the angles modulo $2\pi$.

Tokens are embedded as learned phases and logits are read out by phase
alignment. Writing $\psi(\theta)=(\cos\theta,\sin\theta)$ for the lift and
$\rho$ for a positive gate, the layer maps the phases of token $t$, coordinate
$j$, to updated phases by
\begin{equation}\label{eq:layer}
\begin{aligned}
g^q_t &= \rho\big(W_q\,\psi(\theta_t)\big),\quad g^k_t = \rho\big(W_k\,\psi(\theta_t)\big),\\
s_{t,u} &= \tfrac{\tau}{\sqrt k}\textstyle\sum_j g^q_{t,j}\,g^k_{u,j}\cos\!\big(\theta_{t,j}-\theta_{u,j}+\omega_j(t-u)\big),\quad A_{t,\cdot}=\mathrm{softmax}_{u\le t}\,s_{t,\cdot},\\
G_{t,j} &= \textstyle\sum_{u\le t}A_{t,u}\,e^{i\theta_{u,j}},\quad a_{t,j}=-\sin\theta_{t,j}\,\mathrm{Re}\,G_{t,j}+\cos\theta_{t,j}\,\mathrm{Im}\,G_{t,j},\\
\theta_t &\leftarrow \theta_t+\mathrm{bound}\big(v(\theta_t)\odot a_t\big),\\
\theta_t &\leftarrow \theta_t+\mathrm{bound}\big(\mathrm{SwiGLU}(\theta_t)\big).
\end{aligned}
\end{equation}
The two residual updates are applied in turn. The next subsections define each
line, and \Cref{sec:kuramoto} proves that the circular-mean value line is a
Kuramoto synchronization step when the attention weights are fixed.

Relative to standard query/key/value attention, the roles are shifted. The maps
$W_q$ and $W_k$ create scalar gates whose product changes the metric used by the
score before the softmax. The value line averages the current phase states and
then applies the signed gate $v(\theta_t)$ to the resulting tangent direction.
Thus the query/key gates change which phase alignments are favored before the
softmax, while the value gate changes the strength and sign of the
synchronization step.

\subsubsection{Gated phase similarity}

Each token produces positive query and key gates from its phase features
$\psi(\theta)=(\cos\theta,\sin\theta)$ (the $(\cos,\sin)$ lift used throughout),
\[ g^q_t = \rho\big(W_q\,\psi(\theta_t)\big),\qquad g^k_t =
\rho\big(W_k\,\psi(\theta_t)\big), \] with $\rho=\mathrm{softplus}$ followed by
mean normalization ($\bar g = g/\mathrm{mean}(g)$). The score combines the gates
with the native torus cosine kernel and a rotary phase advance $\omega_j(t-u)$
with learned per-coordinate rates $\omega_j$: \[ s_{t,u} = \frac{\tau}{\sqrt
k}\sum_j g^q_{t,j}\,g^k_{u,j}\,
\cos\!\big(\theta_{t,j}-\theta_{u,j}+\omega_j(t-u)\big), \qquad
A=\mathrm{softmax}_u\!\big(s_{t,u}\big) \] with causal masking. Position enters
the score as the per-coordinate phase drift $\omega_j(t-u)$. The synchronization
update then uses the position-free phase states. We use rotary position for this
drift, with learned per-coordinate rates $\omega_j$, and setting $\omega\equiv0$
recovers the bare torus cosine kernel (\Cref{sec:kuramoto}). The gates
$g^q g^k$ reweight the native cosine alignment coordinate by coordinate.
Equivalently, they define the learned diagonal metric on $\T^k$ used in
\Cref{app:metric}; the rotary drift adds a learned phase advance on each circle
before the score compares two tokens.

\paragraph{Coherence in the score and the value update.} The score and the value
update are built from the same pairwise phase coherence
$\sum_j\cos(\theta_{t,j}-\theta_{u,j})$. The score weights this coherence by a
learned metric to select neighbors, and the value update follows its negative
gradient by pulling each token toward the attention-weighted circular mean of
the tokens it selects. \Cref{sec:kuramoto} develops this shared structure
(\Cref{lem:weld}).

\subsubsection{Circular-mean value update}

Values are the raw phase states. We write $G_{t,j}=\sum_u
A_{t,u}e^{i\theta_{u,j}}$ for the attention-weighted resultant. This resultant
encodes the circular mean of the attended phases. Viewed as a point in the
complex plane, its argument is the mean phase and its modulus measures how
closely the attended phases agree. The update then moves $\theta_{t,j}$ toward
this mean. The increment is the tangent component of $G_{t,j}$ at the current
phase, which we obtain as follows. The unit tangent to the circle at
$\theta_{t,j}$ is $(-\sin\theta_{t,j},\cos\theta_{t,j})$, and projecting
$G_{t,j}=(\mathrm{Re}\,G_{t,j},\mathrm{Im}\,G_{t,j})$ onto this tangent gives \[
a_{t,j} = -\sin\theta_{t,j}\,\mathrm{Re}\,G_{t,j} +
\cos\theta_{t,j}\,\mathrm{Im}\,G_{t,j}. \] Equivalently, writing the phase as
the unit phasor $z_{t,j}=e^{i\theta_{t,j}}$ so that $G_{t,j}=\sum_u
A_{t,u}z_{u,j}$, this projection is the imaginary part of a single complex
product, $a_{t,j}=\mathrm{Im}\!\big(\overline{z_{t,j}}\,G_{t,j}\big)$, from
which $|a_{t,j}|\le|G_{t,j}|\le1$ follows at once. For phase variables, this
circular mean and its tangent direction are closed-form.
\Cref{lem:weld} shows that this increment equals the Kuramoto coupling term. A
per-coordinate value gate $v=v(\theta_t)$ scales $a$. The gate is a signed,
linear readout of the phase features, so a positive entry pulls a coordinate
toward the attended circular mean and a negative entry pushes it away. It is a
learned, content-dependent coupling strength, positive for synchronization and
negative for repulsion. A $\tanh$ bound then keeps the phase increment within a
learned radius. The bound is norm-matched: it preserves the direction of $v\odot
a$, behaves linearly for small increments up to the learned scale, and
compresses large increments toward the radius. Thus the saturating nonlinearity
caps the step size while preserving the relative sizes of typical updates. The
bounded increment is added to the state,
$\theta\leftarrow\theta+\mathrm{bound}(v\odot a)$.

\subsubsection{Feed-forward block}

A SwiGLU feed-forward block produces a second, bounded increment between
synchronization steps. The block acts in local phase coordinates, since the
angles $\theta$ are its input features and its output is a per-coordinate
angular increment,
$\theta\leftarrow\theta+\mathrm{bound}\big(\mathrm{SwiGLU}(\theta)\big)$. Layers
are residual in the phase state. This block uses the standard transformer
feed-forward network, but interprets its output as an angular increment. It mixes
coordinates densely between synchronization steps, complementing the
coordinatewise circular-mean value update. \Cref{sec:valuepath} returns to this
split between the multiplicative value path and additive feed-forward mixing.

\paragraph{State-level normalization.} A transformer interleaves attention with
a normalization layer that rescales the residual stream
\citep{ba2016layernorm,zhang2019rmsnorm}. RMSNorm projects each token onto a
sphere of fixed radius, an analogy made explicit by the normalized transformer
\citep{loshchilov2024ngpt}. Kuramoto attention places the hidden state on $\T^k$
by construction, because every coordinate keeps unit modulus at every layer, and
each update adds a bounded increment to the angles modulo $2\pi$. Thus the state
constraint is built into each coordinate rather than imposed by a separate
normalization layer. The increment bound controls the size of each residual step.

\subsubsection{Phase-alignment readout}

Logits score each vocabulary item by phase alignment against a learned prototype
phase,
\[
\ell_v\propto \sum_j \cos(\theta_j-\phi_{v,j}).
\]
This differs from the standard language-model readout, which applies a learned
linear map from hidden coordinates to vocabulary logits. Here each vocabulary
item is represented by a phase prototype $\phi_v$, and prediction asks which
prototype is most aligned with the final token state on the torus. The readout
therefore uses the same circular similarity as the attention score, rather than
introducing a separate learned linear map at the output.

\paragraph{Design summary.} Kuramoto Attention has four parts. The gated
similarity produces the coupling weights, the circular-mean value update changes
the phases, the increment bound controls the step size, and the feed-forward
block mixes coordinates between synchronization steps. \Cref{sec:kuramoto}
shows that the value update is exactly a Kuramoto coupling step for fixed
attention weights, and \Cref{sec:ablations} tests how much each design choice
contributes to language-modeling performance.

%% file: sections/03_kuramoto_view.tex
\section{The Synchronization Mechanism}
\label{sec:kuramoto}

The layer is built so that attention is a synchronization step. Each token is a
small bank of oscillators (points on a circle), the softmax selects which past
tokens a token attends to, and the value update pulls the token's oscillators
toward the selected ones. Two facts make the mechanism precise: the softmax is
entropy-regularized retrieval, and the value update is the negative gradient of
an attention-weighted coherence energy. On the circle, with attention weights
held fixed, that gradient is exactly Kuramoto coupling. Rotary position enters
the score as a phase drift.

Throughout, we write $G_{t,j}=\sum_u A_{t,u}e^{i\theta_{u,j}}$ for the
attention-weighted resultant: the weighted sum of the attended unit vectors
$e^{i\theta_{u,j}}$, read as a planar vector. Its argument is the circular mean
of those phases, and its length $\lVert G_{t,j}\rVert\in[0,1]$ measures how
closely the phases agree, near $1$ when the attended phases align and near $0$
when they cancel.

\subsection{Softmax selects neighbors}

\begin{proposition}[Entropy-regularized retrieval]
\label{prop:coordasc}
Write the score as $s_{t,u}=\tau\,\bar s_{t,u}$, where $\bar s_{t,u}$ is the
$\tau$-free cosine affinity and $\tau>0$ is the learned score scale. With
retrieval cost $c_t(u)=-\bar s_{t,u}$ and entropy temperature $\lambda=1/\tau$,
the attention weights are the unique minimizer \[ A_{t,\cdot}
=\arg\min_{\pi\in\Delta}\Big\{\textstyle\sum_u
\pi(u)\,c_t(u)+\lambda\,\mathrm{KL}(\pi\,\|\,\mathrm{unif})\Big\}
=\mathrm{softmax}_u\!\big(\bar
s_{t,u}/\lambda\big)=\mathrm{softmax}_u\!\big(s_{t,u}\big). \]
\end{proposition}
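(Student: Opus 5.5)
The plan is to treat this as the standard Gibbs variational principle on the finite simplex $\Delta$ over the causal index set $\{u\le t\}$ (size $n=t$). First I rewrite the objective $F(\pi)=\sum_u \pi(u)c_t(u)+\lambda\,\mathrm{KL}(\pi\,\|\,\mathrm{unif})$ by expanding the KL term as $\sum_u\pi(u)\log\pi(u)+\log n$. The constant $\log n$ does not affect the minimizer, so up to constants $F(\pi)=-\sum_u\pi(u)\bar s_{t,u}+\lambda\sum_u\pi(u)\log\pi(u)$.

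Next I define the candidate $\pi^\star(u)=e^{\bar s_{t,u}/\lambda}/Z$ with $Z=\sum_{u'\le t}e^{\bar s_{t,u'}/\lambda}$. The key step is to complete the KL: since $\bar s_{t,u}=\lambda\log\pi^\star(u)+\lambda\log Z$, substituting gives
\[
F(\pi)=\lambda\,\mathrm{KL}(\pi\,\|\,\pi^\star)-\lambda\log Z+\lambda\log n ,
\]
valid for every $\pi\in\Delta$, with the convention $0\log 0=0$ handling boundary points. By Gibbs' inequality, $\mathrm{KL}(\pi\,\|\,\pi^\star)\ge0$, with equality iff $\pi=\pi^\star$, because $\pi^\star$ has full support. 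Since $\lambda>0$, $\pi^\star$ is the unique minimizer. This argument avoids Lagrange multipliers and boundary cases entirely. An alternative route would be strict convexity of $\pi\mapsto\sum\pi\log\pi$ together with KKT stationarity. I would only mention that route, because the completed-KL identity gives existence and uniqueness at once.

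Finally I substitute $\lambda=1/\tau$, which gives $\bar s_{t,u}/\lambda=\tau\bar s_{t,u}=s_{t,u}$, so $\pi^\star=\mathrm{softmax}_u(s_{t,u})=A_{t,\cdot}$. This matches \eqref{eq:layer}, with the causal mask realized by restricting $\Delta$ to $u\le t$. The only point needing care is bookkeeping rather than difficulty. I must check that $\bar s_{t,u}$ is finite, which holds because the gates are positive and finite and the cosines are bounded, so that $\pi^\star$ has full support on the unmasked indices. I must also note that the gates $g^q,g^k$ and the drift $\omega$ enter only through the fixed numbers $\bar s_{t,u}$, so the statement holds for arbitrary affinities and says nothing specific about phases.
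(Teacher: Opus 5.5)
Your proof is correct and is essentially the paper's argument: the paper simply invokes the Gibbs variational identity and a Lagrange multiplier on the simplex to obtain $\pi^\ast\propto\exp(-c_t/\lambda)$, noting that the uniform reference contributes only a constant. Your completed-KL identity $F(\pi)=\lambda\,\mathrm{KL}(\pi\,\|\,\pi^\star)-\lambda\log Z+\lambda\log n$ is that Gibbs identity written out explicitly, and it is the slightly more complete route: it gives uniqueness and excludes boundary minimizers in one step, whereas the Lagrange condition alone only identifies an interior stationary point and needs strict convexity to conclude it is the unique minimizer.
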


\begin{proof}
The Gibbs variational identity \citep{boyd2004convex} applies. A Lagrange
multiplier on the simplex gives $\pi^\ast\propto\exp(-c_t/\lambda)$, and the
uniform reference contributes only a constant.
\end{proof}

The learned scale $\tau=1/\lambda$ is the inverse temperature of the retrieval.
Large $\tau$ sharpens attention toward the closest keys, while $\tau\to0$ makes
it uniform. The softmax therefore chooses a soft set of neighbors and sets how
strongly each is coupled.

\subsection{The value update is coherence descent}
\label{sec:weld}

Selection fixes the coupling matrix $A$; the value update then pulls each token
toward the tokens it selects. That pull is the negative gradient of a single
coherence energy.

\begin{lemma}[Synchronization is coherence descent]
\label{lem:weld}
For fixed $A$, the value update $a_t$ of \Cref{sec:method} is the negative
gradient of the attention-weighted coherence energy $E_t(\theta_t)=-\sum_u
A_{t,u}\sum_j\cos(\theta_{t,j}-\theta_{u,j})$: \[ a_{t,j}=-\,\partial
E_t/\partial\theta_{t,j}
=-\sin\theta_{t,j}\,\mathrm{Re}\,G_{t,j}+\cos\theta_{t,j}\,\mathrm{Im}\,G_{t,j}
=\sum_u A_{t,u}\,\sin\!\big(\theta_{u,j}-\theta_{t,j}\big). \] This is the
Kuramoto coupling term with coupling matrix $A$.
\end{lemma}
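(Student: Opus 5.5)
The plan is a direct computation carried out coordinate by coordinate. Since $A$ is held fixed, $E_t$ depends on $\theta_{t,j}$ only through the terms $-\sum_u A_{t,u}\cos(\theta_{t,j}-\theta_{u,j})$, and the coordinates decouple. The proof therefore has three steps: differentiate, rewrite the sum using the resultant $G_{t,j}$, and compare the two resulting expressions with the definition of $a_{t,j}$ in \eqref{eq:layer}.

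\textbf{Step 1 (gradient).} Differentiating gives
\[
-\partial E_t/\partial\theta_{t,j}=-\sum_u A_{t,u}\sin(\theta_{t,j}-\theta_{u,j})=\sum_u A_{t,u}\sin(\theta_{u,j}-\theta_{t,j}).
\]
This already matches the Kuramoto form on the right-hand side of the statement. One subtlety must be fixed here. Under causal masking the sum includes $u=t$, and $E_t$ as written contains the self-term $\cos(\theta_{t,j}-\theta_{t,j})\equiv 1$. I will treat $E_t$ as a function of the query-side state $\theta_t$ with the values $\theta_u$ frozen, including $u=t$. This matches the value update, which also treats $G_{t,j}$ as given. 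If the self-term were instead differentiated in both arguments, it would contribute $0$ anyway, because it is constant. So the identity holds under either reading.

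\textbf{Step 2 (resultant form).} Expand $\sin(\theta_{u,j}-\theta_{t,j})=\sin\theta_{u,j}\cos\theta_{t,j}-\cos\theta_{u,j}\sin\theta_{t,j}$. Summing against $A_{t,u}$ and using $\mathrm{Re}\,G_{t,j}=\sum_u A_{t,u}\cos\theta_{u,j}$ and $\mathrm{Im}\,G_{t,j}=\sum_u A_{t,u}\sin\theta_{u,j}$ gives
\[
\cos\theta_{t,j}\,\mathrm{Im}\,G_{t,j}-\sin\theta_{t,j}\,\mathrm{Re}\,G_{t,j},
\]
which is exactly $a_{t,j}$. As a cross-check, the phasor identity from \Cref{sec:method} gives the same result: $\mathrm{Im}(\overline{z_{t,j}}G_{t,j})=\sum_u A_{t,u}\,\mathrm{Im}\,e^{i(\theta_{u,j}-\theta_{t,j})}$.

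\textbf{Step 3 (interpretation).} The final expression $\sum_u A_{t,u}\sin(\theta_{u,j}-\theta_{t,j})$ is the Kuramoto coupling term with coupling matrix $A$ and unit coupling strength.

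No step is a real obstacle, since everything is elementary trigonometry. The only point needing care is the self-term convention in Step 1, and stating explicitly that $A$ is not differentiated. The softmax's dependence on $\theta_t$ is excluded by the hypothesis ``for fixed $A$''.
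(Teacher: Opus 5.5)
Your proposal is correct and follows the same route as the paper's proof: differentiate $-\sum_u A_{t,u}\cos(\theta_{t,j}-\theta_{u,j})$ in $\theta_{t,j}$ with $A$ held fixed, then expand $\sin(\theta_{u,j}-\theta_{t,j})$ to recover the $\mathrm{Re}\,G_{t,j}$, $\mathrm{Im}\,G_{t,j}$ form. Your remark on the $u=t$ self-term is a careful extra check that the paper omits, and it is harmless, since that term contributes zero under either reading.
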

\begin{proof}
$\partial_{\theta_{t,j}}\cos(\theta_{t,j}-\theta_{u,j})=-\sin(\theta_{t,j}-\theta_{u,j})$;
sum against $A_{t,u}$ and negate. The middle expression is the same sum expanded
through
$\sin(\theta_{u,j}-\theta_{t,j})=\sin\theta_{u,j}\cos\theta_{t,j}-\cos\theta_{u,j}\sin\theta_{t,j}$.
\end{proof}

The score and the update use the same cosine coherence. In the score, this
coherence is weighted by the learned metric
$\mathcal G(\theta)=\mathrm{diag}(g^q g^k)$ of \Cref{app:metric}, which sets
\emph{which} tokens couple (\Cref{prop:coordasc}). In the value update, the
negative gradient of the selected coherence is the synchronization step. The
metric picks the neighbors, and the descent step synchronizes with them.

\paragraph{State-dependent coupling.} The descent identity is a single-step
statement with $A$ held fixed. The full layer recomputes $A$ from the current
phases, so the selected neighbors and the coherence being descended can change
from one layer to the next. Its coupling matrix is causal and content-dependent:
each token is pulled toward the attention-weighted circular mean of the tokens
before it, with pull strength set by the resultant length
$\lVert G_{t,j}\rVert$.

\subsection{From direction to increment}

\Cref{lem:weld} gives the \emph{direction} $a_t$ of the update. The implemented
layer turns it into the increment it adds to the state by two further
multiplicative steps (\Cref{sec:method}). A learned, content-dependent
per-coordinate value gate rescales each coordinate's update on its own, with a
learned sign: a positive gate gives Kuramoto attraction toward the attended
mean, a negative gate gives repulsion, so a single layer can both synchronize
and desynchronize a coordinate against the tokens it selects. A norm bound then
rescales the whole increment by one positive scalar. Selection sets where each
phase is pulled; the gate sets how strongly and in which direction; the bound
sets the overall step.

\subsection{A multiplicative value path}
\label{sec:valuepath}

\begin{remark}[Raw phase states as values]
\label{rem:vid}
For any attention matrix $A$, taking the current phase states as values and
projecting the attention-weighted resultant onto the tangent gives the Kuramoto
update $\sum_u A_{t,u}\sin(\theta_{u,j}-\theta_{t,j})$ of \Cref{lem:weld}. A
learned value map changes this operation: it averages transformed features, so
the output is a transformed-feature mean in place of the circular mean of the
current phases. The raw-state value path is the case analyzed here.
\end{remark}

A standard attention layer mixes the embedding coordinates additively. Its value
and output projections $W_V$ and $W_O$ are dense linear maps, so each coordinate
of the output is a weighted sum of all coordinates of the input. This additive
mixing across coordinates is a core part of what a standard attention layer
computes.

Kuramoto attention uses a multiplicative value path. The value that the layer
aggregates is the raw phase state $e^{i\theta_u}$ itself. From the aggregate
$G_t$ the layer forms the Kuramoto direction by the tangent projection
\citep{absil2008optimization}. This projection turns the aggregated state into
the phase increment applied at the current token.
The layer then scales that direction with a learned, content-dependent
per-coordinate value gate $v(\theta_t)$ before bounding it in the value-update
lines of \Cref{eq:layer}. The value transformation therefore happens after the
raw phase states have been averaged: the value gate rescales each coordinate's
update on its own, with a learned sign, and the norm-match bound applies a
single positive scalar to the whole increment. These steps preserve the
coordinatewise value channels.

The attention-like learned maps are therefore gates in two different places. The
query and key gates live before the softmax: they define the metric under which
neighbors are selected. The value gate lives after the softmax: it rescales the
tangent synchronization direction, with a learned sign, after the current phase
states have already been averaged. Additive cross-coordinate computation still
enters the layer, through the gate readouts. Each gate entry is a dense linear
readout of all phases, $g^q_{t,j}=\rho\big((W_q\psi(\theta_t))_j\big)$ with
$W_q$ dense, so coordinate $j$'s gate depends on every phase, and the value gate
is the same kind of dense readout. These readouts set two kinds of
per-coordinate scalars: the query/key products that define the score metric, and
the value-gate strengths that scale the tangent update. Both enter the
synchronization update multiplicatively through the single shared attention
weight $A_{t,u}$. Coordinate values stay separated through the synchronization update;
the feed-forward block, a dense linear map, performs additive value mixing
across coordinates. The ablations test both sides of this split. Making the value path
additive across coordinates, by replacing the value gate with a dense linear
value projection, costs $+0.25$ BPC (\Cref{rem:vid}). A more literal
value-transport variant, which replaces the diagonal metric gates with a
low-rank transport frame before aggregating values, costs $+0.33$ BPC. Removing
the feed-forward block, which mixes coordinates additively, costs $+0.27$ BPC.
These ablations show that the multiplicative value path and the additive
feed-forward block each contribute distinct behavior.

\subsection{Rotary position as phase drift}

Position is the remaining score-side ingredient. We implement it with rotary
encoding, using position as a per-coordinate phase drift in the score,
$s_{t,u}=\tfrac{\tau}{\sqrt k}\sum_j
g^q_{t,j}g^k_{u,j}\cos(\theta_{t,j}-\theta_{u,j}+\omega_j(t-u))$. For this
layer the drift has a direct dynamical role. The rates $\omega_j$ are learned per
coordinate (initialized at the geometric rotary schedule) and serve as the
layer's natural-frequency terms: each coordinate advances at its learned rate
before two token states are compared. The gates and the drift affect selection
in different ways: the gates change the learned metric, while the drift sets the
position-dependent phase advance. Together they determine which tokens couple,
and the value update then synchronizes with those selected tokens
(\Cref{lem:weld}).

\begin{remark}[Score-only position]
\label{rem:scoreonly}
The drift $\omega_j(t-u)$ is fixed by the positions. Applying it in the score
changes which tokens are selected while keeping the content state as raw phases.
The experiments use this score-only placement as the reference positional
mechanism.
\end{remark}

%% file: sections/04_experiments.tex
\section{Experiments}
\label{sec:experiments}


\subsection{Setup}
We train at $\approx 10^6$ and $\approx 5\times 10^6$ parameters on enwiki8
\citep{hutter2006prize}, sequence length $256$, $4$ layers, $50$ epochs, with
total parameters matched to the transformer baseline at each parameter budget.
Optimizer,
learning rate, architecture, exact parameter counts, and the matching protocol
are in \Cref{app:experimental}. We also run matched checks on a byte-level
CodeParrot subset \citep{codeparrot2022clean};
\Cref{tab:codeparrot} reports the 1M and 5M sweeps. Training throughput and
peak memory are reported in \Cref{app:experimental} from the same metrics files.

\subsection{Matched 1M and 5M Comparisons}
\label{sec:maincomp}
\FloatBarrier
We train Kuramoto Attention at $\approx 10^6$ and $\approx 5\times 10^6$
parameters on enwiki8, with total parameters matched to a RoPE+SwiGLU
transformer at each parameter budget. At one million parameters the transformer
leads by about $0.02$ BPC on both validation and test (\Cref{tab:comp}). At five
million parameters, the six Kuramoto-attention runs have validation/test medians
$1.449/1.452$, below the transformer medians $1.452/1.455$. Their all-seed means
are $1.465\pm0.040$ validation and $1.468\pm0.039$ test, within $0.01$ BPC of
the matched-transformer means. Five of the six runs form a tight cluster with
validation/test means $1.449\pm0.002$ and $1.452\pm0.002$, below the
matched-transformer means $1.456\pm0.010$ and $1.461\pm0.012$; the remaining run
separated from the cluster in the first epoch and finished at validation/test BPC
$1.546/1.547$. On byte-level CodeParrot, a second task, the transformer leads at
one million parameters by $0.015$ validation and $0.013$ test bits per byte. At
five million parameters, the Kuramoto-attention seeds have validation/test
medians $1.130/1.127$ and means $1.130/1.127$, below the five-seed transformer
medians $1.141/1.137$ and means $1.142/1.137$ (\Cref{tab:codeparrot}). The
5M mean gaps, transformer minus Kuramoto Attention, are $0.0118$ validation and
$0.0099$ test bits per byte.

\begin{table}[!htbp]\centering
\begin{tabular}{llcccc}
\toprule
 & & \multicolumn{2}{c}{1M} & \multicolumn{2}{c}{5M} \\
\cmidrule(lr){3-4}\cmidrule(lr){5-6}
Model & split & median & mean$\pm$std & median & mean$\pm$std \\
\midrule
Kuramoto attention & val & $1.629$ & $1.637\pm0.011$ & $1.449$ & $1.465\pm0.040$ \\
                   & test & $1.623$ & $1.629\pm0.012$ & $1.452$ & $1.468\pm0.039$ \\
\quad\emph{clustered subset} & val & \multicolumn{2}{c}{$--$} & $1.448$ & $1.449\pm0.002$ \\
                   & test & \multicolumn{2}{c}{$--$} & $1.451$ & $1.452\pm0.002$ \\
\midrule
Matched transformer & val & $1.616$ & $1.616\pm0.004$ & $1.452$ & $1.456\pm0.010$ \\
(RoPE+SwiGLU)        & test & $1.610$ & $1.610\pm0.004$ & $1.455$ & $1.461\pm0.012$ \\
\bottomrule
\end{tabular}
\caption{Validation and test BPC as median and mean$\pm$std. The 1M cells use
five seeds; the 5M Kuramoto-attention cell uses all six completed seeds. The
indented row reports the five-run cluster after one seed separated from the
cluster in the first epoch and finished at val/test BPC $1.546/1.547$.}
\label{tab:comp}
\end{table}

\begin{table}[!htbp]\centering
\begin{tabular}{llcccc}
\toprule
 & & \multicolumn{2}{c}{1M} & \multicolumn{2}{c}{5M} \\
\cmidrule(lr){3-4}\cmidrule(lr){5-6}
Model & split & median & mean$\pm$std & median & mean$\pm$std \\
\midrule
Kuramoto attention & val & $1.246$ & $1.246\pm0.001$ &
  $1.130$ & $1.130\pm0.001$ \\
                   & test & $1.236$ & $1.235\pm0.002$ &
  $1.127$ & $1.127\pm0.001$ \\
\midrule
Matched transformer & val & $1.232$ & $1.231\pm0.004$ &
  $1.141$ & $1.142\pm0.007$ \\
(RoPE+SwiGLU)        & test & $1.223$ & $1.223\pm0.004$ &
  $1.137$ & $1.137\pm0.006$ \\
\bottomrule
\end{tabular}
\caption{Byte-level CodeParrot check at sequence length $256$. Values are bits
per byte. All cells use five seeds. At 1M the transformer leads by $0.015$
validation and $0.013$ test; at 5M Kuramoto Attention is lower than the
transformer by both median and mean, with mean gaps of $0.012$ validation and
$0.010$ test.}
\label{tab:codeparrot}
\end{table}

\FloatBarrier
\subsection{Ablation suite}
\label{sec:ablations}
The metric form of the score organizes the ablations. Kuramoto Attention uses
two main synchronization components: a learned metric on the torus, which
defines the score, and a circular mean, which defines the value update. In
query/key/value language, the query and key maps are metric gates, and the
value-side map is a signed gate on the tangent update. These ingredients,
together with the positional drift in the score, define the synchronization
part of the layer. The feed-forward block supplies dense cross-coordinate
mixing between synchronization steps. We ablate one axis at a time from the
reference configuration and include low-rank value
transport because the metric view suggests a direct alternative to the raw-state
value path: one can try to carry values between token states through a learned
transport rule rather than aggregate the raw phases directly
(\Cref{app:metric,tab:abl}).

\begin{table}[!htbp]\centering
\begin{tabular}{lcl}
\toprule
Ablation & $\Delta$ val BPC & component \\
\midrule
low-rank value transport & $+0.33\pm0.05$ & value transport \\
remove feed-forward block & $+0.27\pm0.00$ & feed-forward \\
learned value projection & $+0.25\pm0.08$ & value update \\
remove value gate & $+0.11\pm0.00$ & value update \\
remove metric gates & $+0.09\pm0.00$ & metric \\
linear feed-forward map & $+0.07\pm0.00$ & feed-forward \\
per-layer (unshared) gates & $+0.05\pm0.01$ & metric \\
shared feed-forward block & $+0.04\pm0.02$ & feed-forward \\
remove gate normalization & $+0.02\pm0.01$ & minor \\
remove value bound & $+0.02\pm0.01$ & minor \\
linear readout & $+0.01\pm0.00$ & readout \\
sigmoid metric gates & $-0.01\pm0.00$ & metric \\
linear (signed) metric gates & $-0.01\pm0.01$ & metric \\
\bottomrule
\end{tabular}
\caption{Ablations and the value-transport comparison, mean $\pm$ std over five
seeds, $\Delta$ from the reference (five-seed mean $1.637$); total parameters
are re-matched per row. In the implementation, low-rank value transport is
a targeted metric/transport comparison: it uses a learned transport metric in
place of the diagonal Q/K gate metric. Negative values improve on the reference.}
\label{tab:abl}
\end{table}

\paragraph{Metric and mean components.} These ablations test the pieces that
make the layer a synchronization update. The circular-mean value update is the
Kuramoto coupling step (\Cref{lem:weld}; circular-mean vs.\ a learned value
projection, value gate and bound). The metric defines the coupling kernel
(\Cref{prop:coordasc}; gate activation, mean-normalization, shared vs.\
per-layer). The rotary drift supplies the score's positional component; in the
reference configuration it is applied only in the score (\Cref{rem:scoreonly}).
The metric and value-update components account for most of the performance
changes in the ablation suite.
Replacing the circular mean with a learned value projection costs $+0.25$ BPC,
and removing the value gate on the
update costs $+0.11$. The value-transport variant tests a more structured
value path motivated by the metric view. A metric suggests connection- or
frame-based ways to carry tangent information between states; this row replaces
the diagonal Q/K gate metric with the low-rank transport metric required to
transport values, and costs $+0.33$ BPC. This transport construction
underperforms the simpler raw-phase mean. Removing the metric gates costs
$+0.09$, and untying them across
layers costs a further $+0.05$, so the single shared metric is important to
performance. The activation that keeps the metric gates positive has little
effect in these runs: replacing softplus with sigmoid gates, or with linear
gates that admit signed coupling, changes validation BPC by $-0.01$ in these
runs. Gate normalization, the value bound, and the phase-alignment readout are
also small effects. The linear-readout row replaces the phase-prototype readout
with the standard learned linear map to vocabulary logits and changes validation
BPC by $+0.01$. The learned selection metric and tangent update matter more than
the smaller stability and readout choices around them.

\paragraph{Feed-forward block.} This block is separate from the coupling kernel
and the synchronization step. It is implemented as the same SwiGLU that the
matched transformer uses, and it is the layer's additive cross-coordinate mixer
(\Cref{sec:valuepath}). Removing it costs $+0.27$ BPC,
the largest clean single-axis effect in the suite. Sharing it across layers in
place of per-layer learning costs $+0.04$, and replacing it with a linear map
costs $+0.07$, so its nonlinearity contributes measurably. These changes show
that dense cross-coordinate mixing contributes to performance alongside the
synchronization update.

\paragraph{Minor components.} Gate normalization, the value bound, and a
phase-alignment readout each move validation BPC by at most $0.02$ relative to their
simpler alternatives.

\paragraph{Ablation pattern.} The ablations favor keeping the Kuramoto design
simple: learned metric gates select neighbors, the raw-phase circular mean
supplies the synchronization direction, a signed value gate controls the step,
and the generic SwiGLU block handles cross-coordinate mixing. The
connection/frame-style value transport is less effective here. Reading the
score as a metric (\Cref{app:metric}) still suggests next variants, including
frame-field connections, transport-based scores, richer local metrics, and
feed-forward maps defined directly on phase features.

\FloatBarrier

\subsection{Phase dynamics}
The trained model shows the synchronization directly in its phase dynamics
(\Cref{fig:order,fig:freq}). For each token, we compute the attention-weighted
order parameter $R_{t,j}=\lvert\sum_u A_{t,u}e^{i\theta_{u,j}}\rvert$, the
coherence of the neighbors selected by attention. Averaged over coordinates,
this local coherence stays high, and each token's phase aligns with that mean,
so tokens synchronize with the neighbors they attend to. We compare it with the
global order parameter
$R^{\mathrm{glob}}_j=\lvert\frac1N\sum_u e^{i\theta_{u,j}}\rvert$, the uniform
all-token coherence; a high global value would mean collapse to one phase. The
global statistic falls with depth while local coherence stays high, showing
phase diversity alongside local synchronization. The learned phase-drift rates
$\omega_j$ also form a depth-to-timescale trend: larger $\omega_j$ advances
phase faster per position, so the score
$\cos(\theta_{t,j}-\theta_{u,j}+\omega_j(t-u))$ de-correlates over a shorter
token distance. Higher frequencies therefore favor shorter-range coupling, lower
frequencies favor longer-range coupling, and the learned profile is consistent
with early layers emphasizing nearby tokens and later layers using longer
context.

\begin{figure}[!htbp]\centering
\includegraphics[width=0.94\linewidth]{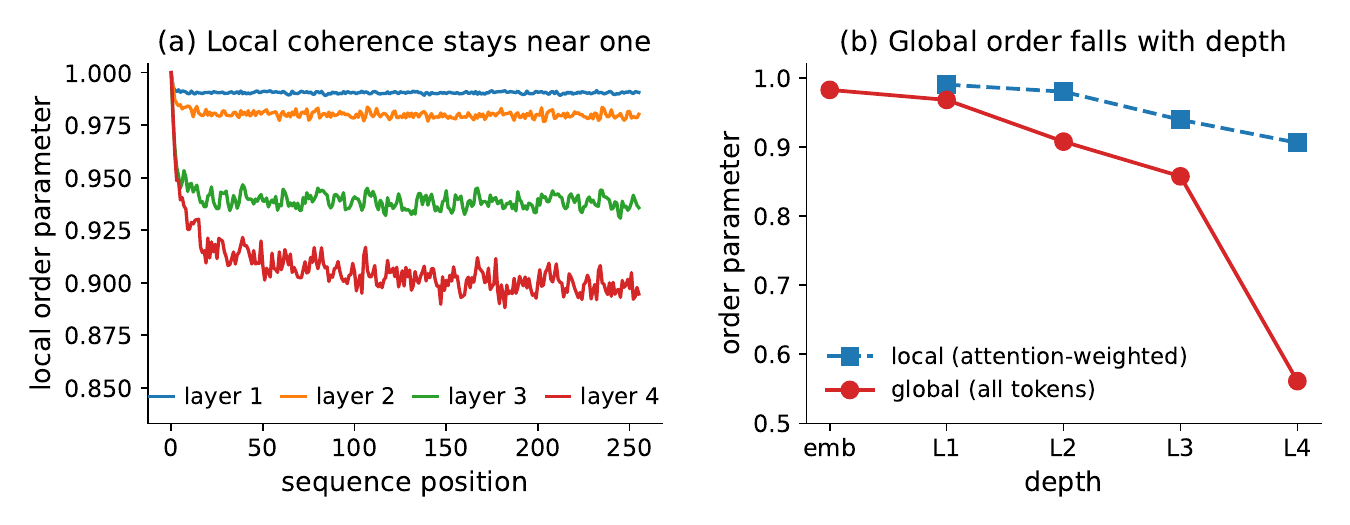} \caption{Local
synchronization with global phase diversity (reference 1M run, validation
split). \textbf{(a)}~The local order parameter $R_t=\langle\lvert\sum_u
A_{t,u}e^{i\theta_{u,j}}\rvert\rangle_j$, the coherence of each token with the
neighbors it attends to, stays near one at every layer and sequence position.
\textbf{(b)}~The global order parameter
$R^{\mathrm{glob}}=\langle\lvert\frac1N\sum_u e^{i\theta_{u,j}}\rvert\rangle_j$,
the same coherence with the attention weights replaced by a flat average over
all $N$ tokens, falls with depth while the local one stays high: each token
locks to its selected neighbors while the population of phases spreads
out.}\label{fig:order}
\end{figure}

\FloatBarrier

\begin{figure}[!htbp]\centering
\includegraphics[width=0.94\linewidth]{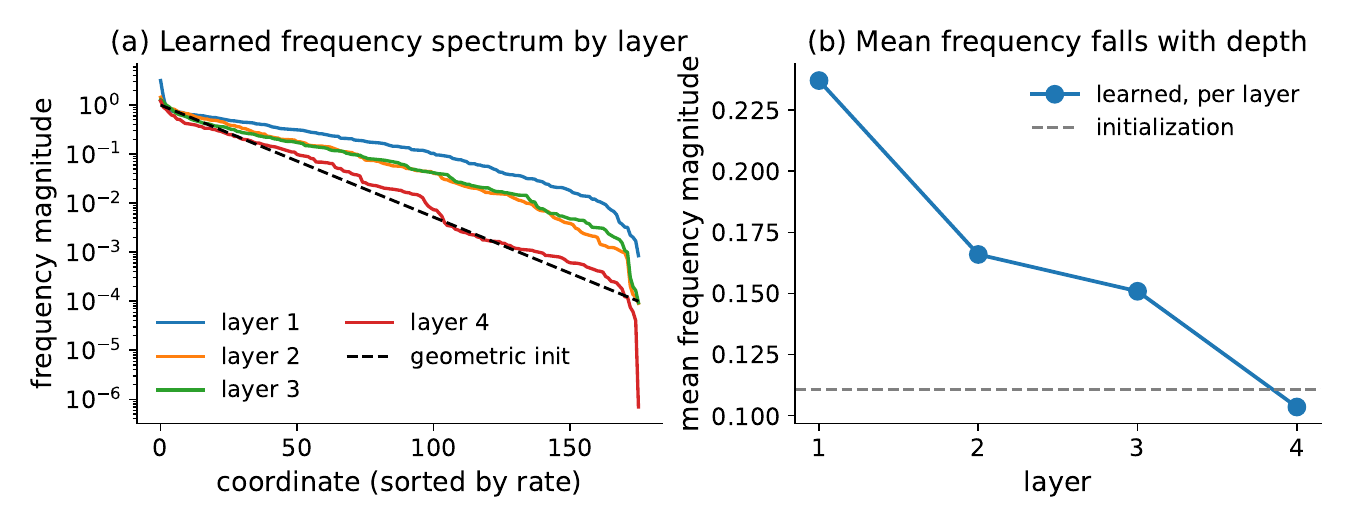} \caption{Learned
phase-drift rates follow a depth-to-timescale trend (reference 1M run).
\textbf{(a)}~The per-layer spectrum of learned rates $\lvert\omega_j\rvert$
departs from the geometric rotary initialization (dashed). \textbf{(b)}~The mean
rate falls with depth; a high frequency de-correlates the score over a short
token distance, so early layers couple short-range and later layers
long-range.}\label{fig:freq}
\end{figure}

\FloatBarrier
\subsection{Heads and depth}
\label{sec:grid}
The main 1M and 5M comparisons and the ablation suite use four layers and a
single head. To check that this fixed shape is representative for both models, we sweep heads
$h\in\{2,\dots,6\}$ and layers $L\in\{2,\dots,6\}$ for both models at the 1M
budget, re-matching the total parameter count in every cell and keeping the rest
of the protocol unchanged, with three seeds per cell. For Kuramoto Attention a
head partitions the $k$ oscillators into $h$ independent populations, each
scored by its own gates and synchronized under its own attention pattern.
\Cref{tab:grid} reports the full sweep.

\begin{table}[!htbp]\centering
{\small
\setlength{\tabcolsep}{3.5pt}
\renewcommand{\arraystretch}{0.92}
\begin{tabular}{lccccc}
\toprule
 & $L=2$ & $L=3$ & $L=4$ & $L=5$ & $L=6$ \\
\midrule
\multicolumn{6}{l}{\emph{Kuramoto attention}} \\
$h=2$ & $1.705\pm0.002$ & $1.635\pm0.002$ & $1.608\pm0.004$ & $1.596\pm0.003$ & $\mathbf{1.585\pm0.007}$ \\
$h=3$ & $1.682\pm0.001$ & $1.634\pm0.004$ & $1.597\pm0.001$ & $1.603\pm0.007$ & $1.594\pm0.008$ \\
$h=4$ & $1.679\pm0.004$ & $1.628\pm0.007$ & $1.605\pm0.002$ & $1.587\pm0.003$ & $1.587\pm0.002$ \\
$h=5$ & $1.677\pm0.001$ & $1.617\pm0.003$ & $1.596\pm0.002$ & $1.593\pm0.004$ & $1.614\pm0.007$ \\
$h=6$ & $1.667\pm0.003$ & $1.632\pm0.004$ & $1.602\pm0.007$ & $1.600\pm0.004$ & $1.609\pm0.006$ \\
\midrule
\multicolumn{6}{l}{\emph{Matched transformer (RoPE+SwiGLU)}} \\
$h=2$ & $1.668\pm0.005$ & $1.612\pm0.002$ & $1.591\pm0.001$ & $1.577\pm0.009$ & $1.567\pm0.004$ \\
$h=3$ & $1.658\pm0.012$ & $1.596\pm0.004$ & $1.585\pm0.006$ & $\mathbf{1.566\pm0.001}$ & $1.578\pm0.007$ \\
$h=4$ & $1.646\pm0.003$ & $1.603\pm0.007$ & $1.591\pm0.003$ & $1.583\pm0.017$ & $1.570\pm0.009$ \\
$h=5$ & $1.653\pm0.013$ & $1.597\pm0.002$ & $1.582\pm0.002$ & $1.631\pm0.022$ & $1.585\pm0.021$ \\
$h=6$ & $1.645\pm0.023$ & $1.593\pm0.003$ & $1.587\pm0.007$ & $1.578\pm0.003$ & $1.590\pm0.005$ \\
\bottomrule
\end{tabular}}
\caption{Validation BPC (best epoch) across heads $h$ and layers $L$ at the 1M
budget, mean$\pm$std over three seeds, total parameters re-matched per cell.
Bold marks the best cell for each model.}
\label{tab:grid}
\end{table}

\FloatBarrier

Two regularities hold for both models. Depth matters more than head count:
moving from two to five layers improves every head count by about $0.06$ to
$0.11$ BPC, while changing heads at fixed depth usually moves only a few
hundredths. The sixth layer changes results by at most about $0.02$, and the
main exception to the depth trend is the high-variance transformer cell at
$h{=}5$, $L{=}5$ ($1.631\pm0.022$). Shape tuning helps both models by about
$0.05$: the best Kuramoto-attention cell reaches $1.585\pm0.007$ ($h{=}2$,
$L{=}6$) from $1.637$, and the best transformer cell reaches
$1.566\pm0.001$ ($h{=}3$, $L{=}5$) from $1.616$. The ${\approx}0.02$ BPC gap at
the 1M budget remains after tuning, so the fixed shape used in \Cref{tab:comp}
does not create the observed gap.

\FloatBarrier

%% file: sections/05_related.tex
\section{Related Work}
\label{sec:related}

\paragraph{Language models as cognitive and neural models.}
Deep language models provide a useful comparison class for human language
processing: their contextual representations and predictive computations align
with measured brain responses during natural language comprehension
\citep{caucheteux2022brains,goldstein2022shared}. More recent work moves beyond
embedding-level comparisons and asks whether transformer circuits and human
language cortex show related forms of functional specialization
\citep{kumar2024shared}. Our work uses the same broad comparison between
language models and cognitive/neural modeling, but focuses on the mechanism
inside the layer: we build a language-modeling attention layer whose update is
an explicit synchronization step, giving concrete phase, coupling, and coherence
variables for future comparisons.

\paragraph{Oscillatory computation, synchrony, and binding.}
Oscillatory dynamics are a long-running abstraction in neuroscience and
cognitive modeling. Cortical traveling waves and coordinated oscillatory
activity have been studied as possible mechanisms for routing, memory,
long-range coordination, and population-level computation
\citep{muller2018cortical}. AKOrN brings this motivation into deep learning by
using Kuramoto oscillatory neurons as a dynamical alternative to standard neural
units \citep{miyato2025akorn}: synchronization binds features, induces
competitive compression, and can be placed inside several connectivity patterns,
including attention. \citet{muzellec2025complex} also add complex-valued
representations and Kuramoto synchronization dynamics to deep networks, with
gains on object binding. These models add oscillator dynamics to a network. In
contrast, our oscillator state is the attention state: the value update is the
Kuramoto coupling step for a fixed attention matrix, with phase-drift rates in
the score serving as the layer's natural-frequency terms (\Cref{sec:kuramoto})
and a coupling kernel given by an entropy-regularized soft assignment
(\Cref{prop:coordasc}), evaluated on language modeling.

\paragraph{Transformers as synchronizing dynamics.}
\citet{geshkovski2023emergence,geshkovski2024mathematical} read self-attention
as an interacting particle system and analyze its limiting clustering behavior.
Our trained phase-valued layer has a per-step synchronization rule for fixed
attention weights: the value update uses the current phase states as values, so
the aggregate encodes a circular mean and its tangent projection is exactly the
Kuramoto step. A learned value map averages transformed features, defining a
different operation from the raw-state value path of \Cref{rem:vid}.

\paragraph{Kuramoto dynamics.} The classical model
\citep{kuramoto1975,acebron2005kuramoto} couples phase oscillators with
heterogeneous natural frequencies. Our layer uses the same coupling direction in
a content-adaptive setting: the coupling matrix is a softmax of token
similarities and the score uses learned phase-drift rates.

\paragraph{Attention as associative memory.} \citet{ramsauer2021hopfield}
connect attention to modern Hopfield retrieval, where a step is energy descent
toward stored patterns. The complementary formulation casts the same step as
synchronization on phase states: the softmax weights act as a coupling kernel
and the value update is the Kuramoto step that pulls each token's phases toward
the phases of the tokens it attends to (\Cref{lem:weld}).

\paragraph{Rotary position.} In the phase-state representation, rotary
embeddings \citep{su2021roformer} act as a position-dependent phase drift in the
score (\Cref{sec:kuramoto}). In Kuramoto Attention, the learned drift rates are
the layer's natural-frequency terms in the score.

%% file: sections/06_conclusion.tex
\section{Conclusion}
\label{sec:conclusion}

We introduced Kuramoto Attention, a phase-valued self-attention layer that keeps
transformer-style softmax selection but makes the value update a synchronization
step. Each token carries a bank of phase oscillators, query/key gates define a
phase-alignment metric for selecting neighbors, and the values are the raw phase
states. With the attention matrix held fixed, the circular-mean value update is
exactly the Kuramoto coupling direction
$\sum_u A_{t,u}\sin(\theta_u-\theta_t)$ (\Cref{lem:weld}). With the matrix
recomputed from the current phases, the residual block becomes a
content-dependent synchronization update for language modeling.

On enwiki8 and CodeParrot, Kuramoto Attention trains as a byte-level language
model at 1M--5M parameters against matched RoPE+SwiGLU transformers. Its
strongest results are at 5M. On CodeParrot, it improves on the transformer by
both mean and median validation/test bits per byte. On enwiki8, all six runs
have lower validation/test medians than the transformer and all-seed means within
$0.01$ BPC; five of six also form a tight lower-mean cluster. At 1M, the
transformer leads by about $0.02$ BPC on enwiki8 and by $0.013$--$0.015$ bits
per byte on CodeParrot.

The ablations and diagnostics identify what makes the layer work. Metric-gated
selection, the circular-mean value update, the value gate, and feed-forward
mixing account for most of the observed performance changes, while the tested
geometry-inspired value-transport rule is worse than aggregating raw phase
states directly. The trained phases show high attention-weighted local coherence
with global phase diversity, and learned phase drifts organize by depth from
shorter- to longer-range coupling.

The ablations also suggest clear directions for future architectures. Kuramoto
Attention already has a geometric interpretation in its score and value update,
and the value-transport comparison suggests that additional geometric structure
is most promising when tied directly to the computation it is meant to improve.
A natural next step is to revisit
connection- and frame-field mechanisms in a more targeted form: learned local
frames could define how tangent directions are compared across token states, and
transport rules could be coupled to the same metric that selects neighbors. The
feed-forward block is the other open component. In this paper a standard SwiGLU
block supplies dense cross-coordinate mixing between synchronization steps.
Future architectures can ask whether that role can be played by a phase-native
vector field, a coupled-oscillator mixing layer, or another geometry-aware
residual map on the torus.

For cognitive science, the contribution is a language-modeling system whose
central attention operation is written in variables familiar from dynamical
coordination: coupling weights, oscillator phases, local coherence, and learned
frequency drift. This makes the layer analyzable at the level of coordination
events alongside its hidden-state representations. One can ask when a token
forms a high-coherence local coalition with its attended context, whether
depth-dependent learned frequencies track context-integration timescales, and
how selection and synchronization divide labor across sequence processing. These
questions give concrete bridges to work on binding, oscillatory coordination,
and predictive language processing, while also providing diagnostics for future
machine-learning models built around synchronization.

%% file: sections/07_appendix_metric.tex
\section{The score as a position-varying metric}
\label{app:metric}

The score splits into an equal-position part and the rotary drift
$\omega_j(t-u)$, where the rotary drift supplies the layer's natural-frequency
terms (\Cref{sec:kuramoto}). The metric is the equal-position part of this split.
To extract it, fix a query at $\theta$ and a key at $\theta+\delta$ with
small $\delta$ and $t=u$. At equal position the rotary drift $\omega_j(t-u)$
vanishes, so the score's kernel $\cos(\theta_{t,j}-\theta_{u,j}+\omega_j(t-u))$
reduces to $\cos\delta_j$, with $\delta_j$ the $j$-th component of the offset
$\delta$. For gates that vary slowly across the torus we treat them as locally
constant, $g^k_j(\theta+\delta)=g^k_j(\theta)+O(\delta\nabla g)$, so the score's
leading $\delta$-dependence is the cosine kernel. Expanding it, \[ s \;=\;
\frac{\tau}{\sqrt k}\sum_j g^q_j(\theta)\,g^k_j(\theta)\,\cos\delta_j \;=\;
\frac{\tau}{\sqrt k}\sum_j g^q_j g^k_j \;-\; \frac{\tau}{2\sqrt k}\sum_j g^q_j
g^k_j\,\delta_j^2 \;+\; O\!\big(\delta^4,\ \delta\nabla g\big), \] so similarity
falls off quadratically in $\delta$ at a rate set by the gate product. The
symmetric quadratic form that measures this second-order decay assigns a squared
length $\sum_j \mathcal{G}_{jj}(\theta)\,\delta_j^2$ to a small displacement
$\delta$, with the conventional factor $1/2$ absorbed. Because the gates
$g^q,g^k$ are strictly positive (softplus outputs), this form is
positive-definite and diagonal on $\T^k$, and it varies smoothly with $\theta$,
so it is a position-dependent metric on the flat torus, \[ \mathcal{G}(\theta)
\;=\; \frac{\tau}{\sqrt k}\, \mathrm{diag}\!\big(g^q(\theta)\,g^k(\theta)\big),
\] position-dependent through the gates. Here \emph{diagonal} describes the
quadratic form: the second-order expansion has diagonal $\delta_j^2$ terms and
zero cross-terms, so $\mathcal{G}$ is diagonal at each point. As a field over
the torus its entries are coupled through the dense gate readouts, because each
$\mathcal{G}_{jj}(\theta)$ depends on all phases (the gates depend on every
coordinate, \Cref{sec:valuepath}), so $\partial_{\theta_i}\mathcal{G}_{jj}\neq0$
for $i\neq j$. Sharing the gates across layers means that every layer uses the
same learned metric when it scores phase alignment. The score uses this metric
to select neighbors, and the value update descends the selected coherence by
\Cref{lem:weld}, pulling each token toward the circular mean of those neighbors.
A metric specifies local lengths and angles. Transporting tangent information
between nearby states is an additional design choice, not something fixed by the
diagonal score alone. The low-rank value-transport row in \Cref{sec:ablations}
tests one such value-side transport rule; frame-field connections and
transport-based scores are related variants. Because the metric entries depend
on all phases through dense gate readouts, transport rules derived from this
field would generally couple coordinates even though the metric is diagonal at
each point. The implemented value update instead uses the closed-form tangent
direction toward the circular mean of the selected tokens, while the
feed-forward block supplies separate dense coordinate mixing.

%% file: sections/09_appendix_experimental.tex
\section{Experimental details}
\label{app:experimental}

\paragraph{Data.} enwiki8 \citep{hutter2006prize}, the first $10^8$ characters
of English Wikipedia, split $90/5/5\%$ into train/validation/test by character.
Inputs are character sequences of length $256$ (vocabulary size $205$). For the
second-task check we use a $10^8$-byte subset of CodeParrot-clean
\citep{codeparrot2022clean}, represented as raw UTF-8 bytes and split by the
same $90/5/5\%$ protocol (vocabulary size $201$, the distinct byte values
present).

\paragraph{Training.} AdamW, learning rate $10^{-3}$, weight decay $0.01$, batch
size $64$, gradient clipping at $1.0$, dropout $0.1$, $50$ epochs. We use the
identical training recipe for both models at both parameter budgets. The
initialization scheme and optimizer settings are held fixed across the matched
comparisons; the reported sweeps vary architecture and random seed under this
shared recipe. We report validation BPC at the best epoch.

\paragraph{Compute.} The metrics files record mean training throughput and peak
training memory. On enwiki8, Kuramoto Attention / transformer throughput is
$460$k / $973$k tokens/s at 1M and $167$k / $335$k tokens/s at 5M; reported peak
memory is $2.17$ / $0.87$ GB at 1M and $4.62$ / $1.94$ GB at 5M. CodeParrot gives
the same pattern: $461$k / $984$k tokens/s at 1M and $170$k / $339$k tokens/s at
5M, with peak memory $2.19$ / $0.89$ GB at 1M and $4.65$ / $1.94$ GB at 5M. All
values are means over seeds.

\paragraph{Architecture and parameter matching.} Both models use $4$ layers, a
SwiGLU feed-forward block, and RoPE position; the transformer baseline is
single-head dot-product attention, and Kuramoto attention is likewise
single-head (one shared bank of $k$ oscillators per layer), so the two differ in
the attention mechanism. We fix the depth and the parameter budget and then vary
the width so that the total number of parameters is equalized at each budget.
For Kuramoto attention the width is the torus dimension $k$, and for the
transformer the width is $d_{\mathrm{model}}$. The two widths count different objects: $k$
counts phase coordinates, each carried by a two-component $(\cos,\sin)$ lift,
whereas $d_{\mathrm{model}}$ counts model channels directly, so the
Kuramoto-attention model reaches the same parameter budget at a larger nominal
width ($k=176$ vs.\ $d_{\mathrm{model}}=120$ at 1M). We hold the total parameter
count fixed across the two models, and the widths are allowed to differ.
\Cref{tab:arch} reports the achieved widths and counts (matched to within $\sim
3\%$). The heads-and-depth study of \Cref{sec:grid} varies $h$ and $L$ away from
this base shape and applies the same matching procedure in every cell (for
example, at $h=4$, $L=6$ the matched widths are $k=148$ with $982{,}882$
parameters and $d_{\mathrm{model}}=100$ with $1{,}006{,}005$ parameters). The
CodeParrot checks use the same matched widths as the corresponding enwiki8
runs. Because the byte vocabulary is slightly smaller, the achieved counts are
$1{,}001{,}978$ and $4{,}965{,}210$ for Kuramoto attention, and $973{,}881$ and
$4{,}995{,}249$ for the matched transformer.

\begin{table}[h]\centering
\begin{tabular}{llcc}
\toprule
Budget & Model & width & parameters \\
\midrule
1M & Kuramoto attention & $k=176$ & $1{,}003{,}386$ \\
   & Matched transformer        & $d_{\mathrm{model}}=120$ & $974{,}845$ \\
5M & Kuramoto attention & $k=400$ & $4{,}968{,}410$ \\
   & Matched transformer        & $d_{\mathrm{model}}=276$ & $4{,}997{,}461$ \\
\bottomrule
\end{tabular}
\caption{Achieved widths and parameter counts at each parameter budget.}\label{tab:arch}
\end{table}

\paragraph{Ablation protocol.} The rows in \Cref{tab:abl} start from the 1M
Kuramoto-attention reference configuration, use the same training recipe, run
five seeds, and re-match width to the target parameter count when a change
affects capacity. Low-rank value transport is the value-transport comparison
motivated by the metric view. It uses \texttt{use\_qk\_gates=false},
\texttt{metric\_transport\_mode=lowrank}, and
\texttt{metric\_transport\_values=true}; in the implementation, value transport
is defined in the same learned transport metric that replaces the diagonal Q/K
gate metric. The ablation table therefore reports it as a targeted comparison
rather than as a separate baseline.

\noindent The exact configuration files will accompany the anonymized code
release.